\documentclass[11pt]{article}

\usepackage{amsfonts, amsmath,amssymb,array,latexsym, amscd}
\usepackage[hidelinks,colorlinks=true,linkcolor=blue,citecolor=blue]{hyperref}

\usepackage{fancyhdr,a4wide}
\usepackage{amsthm}
\usepackage{mathtools} 
\usepackage{subcaption}
\usepackage{xcolor}
\usepackage{amsmath}
\usepackage{bm}
\usepackage{thm-restate}

\newtheorem{theorem}{Theorem}[section]
\newtheorem{lemma}[theorem]{Lemma}

\newtheorem{proposition}[theorem]{Proposition}

\numberwithin{equation}{section}

\newcommand{\twiddle}[1]{\widetilde{#1}}

\newcommand{\norm}[1]{\left|\left|#1\right|\right|}
\newcommand{\lr}[1]{\left(#1\right)}
\newcommand{\abs}[1]{\left|#1\right|}
\newcommand{\set}[1]{\left\{#1\right\}}
\newcommand{\E}[1]{\mathbb E\left[#1\right]}

\newcommand{\R}{\mathbb R}

\newcommand{\gives}{\rightarrow}

%\null\vskip-1.5in

%%%%%%%%%%%%%%%%%%%%%%%%%%%%%%%%%%%%%%%%%%%%%%%%%%%%%%%%%%%%
%\null\vskip-1.5in

\title{Depth Dependence of $\mu$P Learning Rates in ReLU MLPs}

\author{Samy Jelassi$^{1}$\footnote{Work done while interning at Google NYC.}, Boris Hanin$^{1}$, Ziwei Ji$^{2}$, Sashank J.\ Reddi$^{2}$,\\
Srinadh Bhojanapalli$^{2}$, Sanjiv Kumar$^{2}$}
\date{%
    $^1$Princeton University\\
    $^2$Google Research, NYC
}
%%%%%%%%%%%%%%%%%%%%%%%%%%%%%%%%%%%%%%%%%%%%%%%%%%%%%%%%%%%%

\begin{document}

\maketitle

\begin{abstract}
  In this short note we consider random fully connected ReLU networks of width $n$ and depth $L$ equipped with a mean-field weight initialization. Our purpose is to study the dependence on $n$ and $L$ of the  maximal update ($\mu$P) learning rate, the largest learning rate for which the mean squared change in pre-activations after one step of gradient descent remains uniformly bounded at large $n,L$. As in prior work on $\mu$P \cite{yang2022tensor}, we find that this maximal update learning rate is independent of $n$ for all but the first and last layer weights. However, we find that it has a non-trivial dependence of $L$, scaling like $L^{-3/2}.$ 
\end{abstract}

\section{Introduction}\label{sec:intro}
Using a neural network requires many choices. Even after fixing an architecture, one must still specify initialization scheme, learning rate (schedule), batch size, data augmentation, regularization strength, and so on. Moreover, model performance is often highly sensitive to the setting of these hyperparameters, and yet exhaustive grid search type approaches are computationally expensive. It is therefore important to develop theoretically grounded principles for reducing the cost of hyperparameter tuning. In this short note we focus specifically on the question of how to select learning rates in a principled way. More precisely, our purpose is to generalize the \textit{maximal update} ($\mu$P) approach of \cite{yang2022tensor} to setting learning rates to take into account network depth. 

\subsection{Overview of $\mu$P Approach to Learning Rates}
We study learning rates in the simple setting of depth $L$ fully connected neural networks with ReLU activations and a uniform value $n$ for the input dimension and the hidden layers widths.\footnote{Our computations readily generalize to the case of variable layer widths. Indeed, we carry out the proof of Theorem \ref{thm:main} in this context.} In such a network, by definition, each input $x\in \R^{n}$ produces an output $z^{(L+1)}(x)\in \R^n$ through a sequence of pre-activations $z^{(\ell)}(x)\in \R^n$ given by 
\begin{align}
\label{eq:z-def}    z^{(\ell+1)}(x)&=\begin{cases}W^{(\ell+1)}\sigma\lr{z^{(\ell)}(x)},&\quad \ell \geq 1\\
    W^{(1)}x,&\quad \ell = 0
    \end{cases},\qquad \sigma(t):=\max \set{0,t}.
\end{align}
Selecting learning rates cannot be done independently of an initialization scheme. As in \cite{yang2022tensor}, we draw random weights for the network \eqref{eq:z-def} from the so-called  mean-field initialization 
\begin{align}
 \label{eq:W-def}   W_{ij}^{(\ell)} \sim \begin{cases}\mathcal N\lr{0,2/n},&\quad \ell =1,\ldots, L\\
    \mathcal N\lr{0,1/n^2},&\quad \ell = L+1\end{cases}.
\end{align}
The factor of two in variance of hidden layer weights corresponds to the well-known He initialization \cite{he2015delving}, which ensures that the expected squared activations neither grow nor decay with depth:
\begin{equation}\label{eq:He-init}
\E{\norm{z^{{(\ell)}}(x)}^2} = \norm{x}^2,\quad \forall \ell =1, \ldots, L.    
\end{equation}
The much smaller variance of weights in the final layer distinguishes the initialization scheme \eqref{eq:W-def} from the so-called NTK initialization \cite{jacot2018neural}. The difference is twofold. First, when $n$ is large the network output $z^{(L+1)}(x)$ is close to zero. However, crucially, the parameter gradients $\nabla_\theta z^{(L+1)}(x)$ are remain non-zero. Second, even in the infinite width limit $n\gives \infty$ networks trained by gradient descent are capable of feature learning \cite{mei2018mean,nguyen2020rigorous,rotskoff2018parameters,yang2022tensor}. This is in contrast to the setting where the final layer weight variance scales like $1/n$, which corresponds to the kernel regime in which neural networks trained by SGD with a small learning rate on a mean squared error loss converge to linear models and hence cannot learn data-adaptive features \cite{du2018gradient,jacot2018neural,liu2022loss}. 

A key contribution of \cite{yang2022tensor} is that the initialization \eqref{eq:W-def} not only leads to feature learning at large $n$ but also allows for zero-shot learning rate transfer with respect to variable width. This means that, empirically, for a fixed depth $L$ the learning rate at small $n$ that leads to the smallest training loss after one epoch is close to constant as one varies $n$\footnote{Strictly speaking, the $\mu$P prescription gives $n$-dependent learning rates for weights in the first and last layer and $n$-independent learning rates for weights in other layers (see Table 3 of \cite{yang2022tensor}).}. Hence, in practice, one may do logarithmic grid search for good learning rates in relatively small models (with small $n$) and then simply re-use the best learning rate for wider networks. 

\subsection{Main Result: Extending the $\mu$P Heuristic to Deeper Networks} Instead of studying directly the training loss  after one epoch \cite{yang2022tensor} introduces what we will refer to here as the \textit{maximal update heuristic}, which says that a good learning rate is one that corresponds to the largest change in hidden layer pre-activations after one step of GD that does not lead to a divergence at large $n$. More precisely, the relation \eqref{eq:He-init} shows that $i$-th neuron pre-activation in layer $\ell$ corresponding to an input $x$ that satisfies
\[
\E{\lr{z_i^{(\ell)}(x)}^2} = \frac{1}{n}\norm{x}^2,\qquad i=1,\ldots, n,\quad \ell = 1,\ldots, L,
\]
with the average being over  initialization. To study the change in neuron pre-activations under GD we consider a batch $\mathcal B= \set{(x,y)}$ size of $1$ and the associated mean-squared error
\[
\mathcal L_{\mathcal B}(\theta):=\frac{1}{2} \norm{z^{(L+1)}(x;\theta)-y}^2,
\]
where we've emphasized the dependence of the network output $z^{(L+1)}(x;\theta)$ on the network weights $\theta$. Let us denote by 
\[
\Delta^{\mathcal B} z_i^{(\ell)}(x) = \text{change in }z_i^{(\ell)}(x) \text{ after first step of GD on }\mathcal L_{\mathcal B}.
\]
The maximal update heuristic then asks that we set the learning rate $\eta$ so that
\begin{equation}\label{eq:muP-heuristic}
\mu\text{P learning rate } \eta^*:= \text{learning rate for which } \E{\lr{\Delta^{\mathcal B} z_i^{(\ell)}(x)}^2} = 1,    
\end{equation}
where the average is over initialization. A priori, $\eta^*$ depends on both network $n$ width and depth $L$. The article \cite{yang2022tensor} shows that $\eta^*$ does not depend on $n$ and hence can be estimated accurately at small $n$. In this article, we take up the question of how $\eta^*$ depends on depth. The following theorem shows that $\eta^*$ is not depth-independent:
\begin{theorem}\label{thm:main}
For each $c_1>0$ there exists $c_2,c_3>0$ with the following property. Fix a network width $n$ and depth $L$ so that $L/n < c_1$. Then, 
\begin{equation}\label{eq:update-size}
    \sup_{n\geq 1}\abs{\E{\lr{\Delta^{\mathcal B} z_i^{(\ell)}(x)}^2} - c_2 \eta^2 \ell^3}\leq c_3\eta^2 \ell^2,
\end{equation}
where $\mathcal B=\set{(x,y)}$ is any batch of size one consisting of a normalized datapoint $(x,y)$ sampled independent of network weights and biases with:
\[
\E{\frac{1}{n}\norm{x}^2} = 1,\qquad \E{\norm{y}^2} = 1.
\]
\end{theorem}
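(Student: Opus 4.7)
The plan is to compute $\E{(\Delta^{\mathcal B} z_i^{(\ell)})^2}$ by a first-order Taylor expansion in $\eta$ combined with a chain-rule decomposition of $\Delta z^{(\ell)}$ that separates contributions from weights in layers $\le \ell$ from those in layers $> \ell$. Linearizing one step of gradient descent, the contribution to $\Delta z^{(\ell)}$ from updating $W^{(\ell'+1)}$, $0 \le \ell' \le \ell-1$, is to leading order in $\eta$
\[
-\eta\,\norm{\sigma(z^{(\ell')})}^2\, J^{(\ell,\ell'+1)} (J^{(L+1,\ell'+1)})^{T}(z^{(L+1)} - y),
\]
where $J^{(a,b)} := \partial z^{(a)}/\partial z^{(b)}$ and $\sigma(z^{(0)}) := x$. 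The $O(\eta^2)$ remainder (from ReLU activation-pattern flips and from products of two or more weight updates) can be controlled in the regime $L/n < c_1$ at size $O(\eta^2\ell^2)$. Summing over $\ell'$ and applying $J^{(L+1,\ell'+1)} = J^{(L+1,\ell)} J^{(\ell,\ell'+1)}$ gives $\Delta z^{(\ell)} = -\eta\, S\, w + O(\eta^2)$, where
\[
S := \sum_{\ell'=0}^{\ell-1} \norm{\sigma(z^{(\ell')})}^2\, J^{(\ell,\ell'+1)}(J^{(\ell,\ell'+1)})^{T},\qquad w := (J^{(L+1,\ell)})^{T}(z^{(L+1)} - y).
\]
Crucially, $S$ depends only on $W^{(1)},\ldots,W^{(\ell)}$ and $x$, while $w$ depends only on $W^{(\ell+1)},\ldots,W^{(L+1)}$ and $y$, and these two weight groups are independent under \eqref{eq:W-def}.

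Next, compute $\Sigma_w := \E{ww^{T}\mid W^{(1)},\ldots,W^{(L)},x,y}$ by conditioning on everything except the Gaussian $W^{(L+1)}$ and applying Isserlis' theorem. One finds $\Sigma_w = (c_w/n^2)\,I + R$, where $c_w > 0$ is an explicit $O(1)$ constant depending on $\E{\norm{y}^2}$ and the average of $\norm{\sigma(z^{(L)})}^2/n$, and $R$ has spectral norm $o(1/n^2)$ uniformly in $L/n < c_1$. Combined with the independence of $S$ and $w$, this gives
\[
\E{(\Delta z_i^{(\ell)})^2} = \eta^2\,\frac{c_w}{n^2}\,\E{(S^2)_{ii}} + O(\eta^2\ell^2).
\]

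The crux is now $\E{(S^2)_{ii}}$. Expand $S^2 = \sum_{\ell',\ell''} \beta_{\ell'}\beta_{\ell''} B_{\ell'} B_{\ell''}$ with $\beta_{\ell'} := \norm{\sigma(z^{(\ell')})}^2 \approx n/2$ and $B_{\ell'} := J^{(\ell,\ell'+1)}(J^{(\ell,\ell'+1)})^{T}$. For $\ell'\le\ell''$, factor $J^{(\ell,\ell'+1)} = J^{(\ell,\ell''+1)} J^{(\ell''+1,\ell'+1)}$; the middle factor (a product of fresh layers) has expectation $\E{J^{(\ell''+1,\ell'+1)}(J^{(\ell''+1,\ell'+1)})^{T}} = I$, so to leading order $\E{(B_{\ell'} B_{\ell''})_{ii}} \approx \E{(B_{\ell''}^2)_{ii}} = m_2(\ell - \ell'' - 1)$, where $m_2(q) := \E{\mathrm{tr}((JJ^{T})^2)}/n$ for $J$ a product of $q$ fresh He-initialized ReLU Jacobians. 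By induction on $q$, tracking the evolution of this second moment under appending one fresh layer via the fourth-moment formula for Gaussians and Bernoulli masks, $m_2(q) = 1 + 2q + O(q/n)$. Combined with the combinatorial identity $\sum_{q=0}^{\ell-1}(2\ell-2q-1)(1+2q) = \tfrac{2}{3}\ell^3 + O(\ell^2)$, this produces $\E{(S^2)_{ii}} = n^2\ell^3/6 + O(n^2\ell^2)$ and hence $\E{(\Delta z_i^{(\ell)})^2} = c_2\eta^2\ell^3 + O(\eta^2\ell^2)$ with $c_2 = c_w/6$.

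\textbf{Main obstacle.} The heart of the argument is the second-moment calculation for $S$. One must simultaneously (i) derive $m_2(q) = 1 + 2q + O(q/n)$ uniformly for $q \le L \le c_1 n$ via an inductive argument that propagates several covariance-like quantities layer by layer with a careful $1/n$-expansion; (ii) absorb into the $O(\ell^2)$ error the correlations between $\beta_{\ell'}$ and $B_{\ell'}$, as well as those between the outer Jacobian $J^{(\ell,\ell''+1)}$ and the inner factor $J^{(\ell''+1,\ell'+1)}$ (which share dependence on the activation masks $D^{(k)}$ with $k \ge \ell''+1$); and (iii) control the cross-terms $\E{S_{ij}S_{ij'}}\E{w_jw_{j'}}$ with $j \ne j'$ coming from the non-diagonal remainder $R$ of $\Sigma_w$. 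All three points critically leverage the smallness of $L/n$.
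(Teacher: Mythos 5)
Your overall architecture matches the paper's more closely than it might appear: your matrix $S$ is exactly the paper's parameter-gradient Gram matrix, since for weights $\mu$ in layer $\ell'+1$ one has $\sum_{\mu}\partial_\mu z_{j_1}^{(\ell)}\partial_\mu z_{j_2}^{(\ell)} = \norm{\sigma(z^{(\ell')})}^2\bigl(J^{(\ell,\ell'+1)}(J^{(\ell,\ell'+1)})^T\bigr)_{j_1j_2}$, so your $\E{(S^2)_{ii}}$ is the paper's $B^{(\ell)}$ up to normalization. Where you genuinely diverge is in how the $\ell^3$ growth is extracted: the paper sets up four coupled one-step recursions in $\ell$ (for $B^{(\ell)},\twiddle{B}^{(\ell)},C^{(\ell)},\twiddle{C}^{(\ell)}$, Lemmas \ref{lem:lemma_three}--\ref{lem:CCtwiddle}) and reads off $C^{(\ell)}=\Theta(\eta\ell)$, $\twiddle B^{(\ell)}/n=\Theta(\eta^2\ell^2)$, $B^{(\ell)}=\Theta(\eta^2\ell^3)$, whereas you expand the double sum over layer pairs directly and reduce to the single sequence $m_2(q)=\E{\mathrm{tr}((JJ^T)^2)}/n = 1+2q+\cdots$ plus a closed-form combinatorial sum. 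Your route is arguably cleaner and, unlike the paper's $\Theta$-bookkeeping, would actually identify the constant $c_2$ that the theorem statement asserts; the paper's layer-peeling recursion has the advantage that each step integrates out only the topmost layer, so the conditioning issues you list in obstacle (ii) never arise in nested form. (Both proofs share the unaddressed issue that the chain-rule linearization of the GD step is treated as exact.)

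There is, however, one step that is wrong as written: the claim that $S$ and $w$ are independent, and that $\Sigma_w = \E{ww^T\mid W^{(1)},\dots,W^{(L)},x,y}$ equals $(c_w/n^2)I + R$ with $\norm{R}_{\mathrm{op}} = o(1/n^2)$. Writing $J^{(L+1,\ell)} = W^{(L+1)}G$ with $G$ the masked product of layers $\ell{+}1,\dots,L$, integrating out only $W^{(L+1)}$ gives $\Sigma_w \propto \frac{1}{n^2}G^TG$ plus a rank-one piece; $G^TG$ is \emph{not} close to the identity in operator norm for deep networks --- its normalized squared Frobenius norm is $1+2(L-\ell)+\cdots$, which is precisely the depth-growing fourth-moment effect this whole paper is about. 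Moreover $w$ is not a function of $W^{(\ell+1)},\dots,W^{(L+1)},y$ alone: the activation masks inside $G$ and the output $z^{(L+1)}$ depend on the first $\ell$ layers, so the factorization $\E{S_{ij}S_{ij'}}\E{w_jw_{j'}}$ in your obstacle (iii) is also not literally valid. The fix is the paper's Lemma \ref{lem:lemmatwoprime}: since $S$ is measurable with respect to the first $\ell$ layers, only the \emph{conditional first moment} $\E{G^TG\mid z^{(\ell)}}$ enters, and a layer-by-layer integration combined with the $W\mapsto -W$ symmetrization (which decouples the Bernoulli masks from the Jacobian entries) shows this conditional expectation is proportional to the identity. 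With that replacement your argument goes through, but as stated the isotropy-of-$\Sigma_w$ step is false, and it is the one place where the "condition on the first $\ell$ layers" structure cannot be avoided.
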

\noindent Theorem \ref{thm:main} shows that the $\mu$P heuristic \eqref{eq:muP-heuristic} dictates that
\[
\eta^*(L) = \text{const}\cdot L^{-3/2}.
\]

\section{Proof of Theorem \ref{thm:main}}
\subsection{Notation and Problem Setting} We prove a slightly more general result than Theorem \ref{thm:main} in two senses. First, we allow for variable widths:
\[
n_\ell = \text{width of layer }\ell = 0,\ldots, L+1
\]
Second, we will also allow for parameter-dependent learning rates: 
\[
\eta_\mu = \text{ learning rate used for parameter }\mu.
\]
At the end we will restrict to the case where $\eta_\mu =\eta$ is independent of $\mu$. Moreover, in order to state our proof most efficiently, we introduce some notation. Namely, we will write $x_\alpha\in \R^{n_0}$ for the network input at which we study both the forward and backward pass and will denote for brevity
\[
z_{i;\alpha}^{(\ell)}:=z_i^{(\ell)}(x_\alpha),\qquad z_\alpha^{(\ell)}:=z^{(\ell)}(x_\alpha).
\]
Thus, the batch loss $\mathcal L_{\mathcal B}$ we consider is
\[
\frac{1}{2}\norm{z_\alpha^{(L+1)}-y_\alpha}^2.
\]
Further, we abbreviate
\[
\Delta z_{i;\alpha}^{(\ell)} := \Delta^{\mathcal B} z_{i;\alpha}^{(\ell)}.
\]
With this notation, the forward pass now takes the form
\begin{align*}
    z_{i;\alpha}^{(\ell+1)} = \begin{cases}\sum_{j=1}^{n_0}W_{ij}^{(1)}x_{j;\alpha} ,&\quad \ell = 0\\ 
    \sum_{j=1}^{n_{\ell-1}} W_{ij}^{(\ell)} \sigma\lr{z_{j;\alpha}^{(\ell)}},&\quad \ell = 1,\ldots, L\end{cases}
\end{align*}
and the initialization scheme is
\begin{align*}
    W_{ij}^{(\ell+1)}\sim \begin{cases}
    \mathcal N\lr{0,\frac{1}{n_L^2}},&\quad \ell = L\\
    \mathcal N\lr{0,\frac{2}{n_{\ell}}},&\quad \ell=0,\ldots, L-1
    \end{cases}.
\end{align*}

\subsection{Proof Details }
We begin with the following Lemma.

\begin{lemma}\label{lem:lemma_one}
  For any depth $\ell\leq L$, the pre-activation change satisfies
  \begin{align*}
     \mathbb{E}[(\Delta  z_{i;\alpha}^{(\ell)})^2]=A^{(\ell)}+B^{(\ell)},
  \end{align*}
  where

\vspace*{-.6cm}
  
\begin{align}
   A^{(\ell)}&:=\mathbb E \left[\frac{1}{n_L^2}\sum_{\mu_1,\mu_2\leq \ell}\eta_{\mu_1}\eta_{\mu_2} \partial_{\mu_1}z_{1;\alpha}^{(\ell)}\partial_{\mu_2}z_{1;\alpha}^{(\ell)}\right.\\
   &\qquad \qquad \times \left.\frac{1}{n_L^2}\sum_{j_1,j_2=1}^{n_L}\left\{\partial_{\mu_1}z_{j_1;\alpha}^{(L)}\partial_{\mu_2}z_{j_1;\alpha}^{(L)}\lr{z_{j_2;\alpha}^{(L)}}^2+2z_{j_1;\alpha}^{(L)}\partial_{\mu_1}z_{j_1;\alpha}^{(L)}z_{j_2;\alpha}^{(L)}\partial_{\mu_2}z_{j_2;\alpha}^{(L)}\right\}\right],\notag\\
B^{(\ell)}&:=\E{\frac{1}{n_L}\sum_{\mu_1,\mu_2\leq \ell}\eta_{\mu_1}\eta_{\mu_2} \partial_{\mu_1}z_{1;\alpha}^{(\ell)}\partial_{\mu_2}z_{1;\alpha}^{(\ell)} \frac{1}{n_L}\sum_{j=1}^{n_L} \partial_{\mu_1}z_{j;\alpha}^{(L)}\partial_{\mu_2}z_{j;\alpha}^{(L)}}.\label{label:Blinit}
\end{align}

\end{lemma}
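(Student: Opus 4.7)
My plan is to linearize a single GD step, square, and integrate out the final-layer weights $W^{(L+1)}$. Concretely, I would carry out three steps: (i) a first-order Taylor expansion of the pre-activation update; (ii) squaring and isolating the $W^{(L+1)}$-dependence; (iii) Gaussian integration of $W^{(L+1)}$ via Isserlis/Wick to recover the two summands $A^{(\ell)}$ and $B^{(\ell)}$.

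For (i), the GD step perturbs parameters by $\Delta\theta_\mu=-\eta_\mu\partial_\mu\mathcal L_{\mathcal B}$, so to leading order in the learning rates
\[
\Delta z_{i;\alpha}^{(\ell)} = -\sum_{\mu\leq\ell}\eta_\mu\lr{\partial_\mu z_{i;\alpha}^{(\ell)}}\lr{\partial_\mu\mathcal L_{\mathcal B}},\qquad \partial_\mu\mathcal L_{\mathcal B}=\sum_j\lr{z_{j;\alpha}^{(L+1)}-y_{j;\alpha}}\partial_\mu z_{j;\alpha}^{(L+1)}.
\]
The restriction to $\mu\leq\ell$ reflects that $\partial_\mu z_{i;\alpha}^{(\ell)}=0$ for parameters in layers above $\ell$. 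The permutation symmetry of the initialization across neurons within a layer lets me replace the free index $i$ by $1$ after taking expectation.

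For (ii), write $e_j := z_{j;\alpha}^{(L+1)}-y_{j;\alpha}$ and square to get
\[
\E{\lr{\Delta z_{1;\alpha}^{(\ell)}}^2}=\sum_{\mu_1,\mu_2\leq\ell}\eta_{\mu_1}\eta_{\mu_2}\E{\partial_{\mu_1}z_{1;\alpha}^{(\ell)}\partial_{\mu_2}z_{1;\alpha}^{(\ell)}\sum_{j_1,j_2}e_{j_1}e_{j_2}\,\partial_{\mu_1}z_{j_1;\alpha}^{(L+1)}\,\partial_{\mu_2}z_{j_2;\alpha}^{(L+1)}}.
\]
Conditioning on all randomness except $W^{(L+1)}$, both $z_j^{(L+1)}=\sum_a W_{ja}^{(L+1)}\sigma(z_a^{(L)})$ and $\partial_\mu z_j^{(L+1)}=\sum_a W_{ja}^{(L+1)}\sigma'(z_a^{(L)})\partial_\mu z_a^{(L)}$ are linear in $W^{(L+1)}$, while $\partial_\mu z^{(\ell)}_{1;\alpha}$ and $y$ are independent of it. Expanding $e_{j_1}e_{j_2}$ yields one $yy$-term (degree $2$ in $W^{(L+1)}$), one $zz$-term (degree $4$), and two cross terms (degree $3$); the cross terms integrate to zero under the conditional Gaussian expectation.

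For (iii), the degree-$2$ piece has a single Wick contraction $\E{W_{j_1 a}^{(L+1)}W_{j_2 b}^{(L+1)}}=\delta_{j_1 j_2}\delta_{ab}/n_L^2$, which collapses one $j$-sum; after using $\E{\norm{y}^2}=1$ this matches $B^{(\ell)}$. The degree-$4$ piece admits three Wick pairings of the four $W^{(L+1)}$ factors. The pairing that groups $z_{j_1}^{(L+1)}\!\leftrightarrow\! z_{j_2}^{(L+1)}$ with $\partial_{\mu_1}z_{j_1}^{(L+1)}\!\leftrightarrow\!\partial_{\mu_2}z_{j_2}^{(L+1)}$ yields the first summand of $A^{(\ell)}$, namely the one containing $\lr{z_{j_2}^{(L)}}^2$; the two remaining pairings give identical contributions whose sum produces the $2\,z_{j_1}^{(L)}\partial_{\mu_1}z_{j_1}^{(L)}z_{j_2}^{(L)}\partial_{\mu_2}z_{j_2}^{(L)}$ term with coefficient $2$. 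The ReLU identity $\sigma(z)\sigma'(z)=\sigma(z)$ absorbs the mixed activation/derivative factors into the bare $z^{(L)}$-notation that appears in the lemma.

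The main obstacle is bookkeeping rather than any new analytic idea: one must track the three Wick pairings, separate diagonal from off-diagonal $j_1,j_2$ contributions, and verify that all factors of $n_L$ collapse into the prefactors $1/n_L^2$ in $A^{(\ell)}$ and $1/n_L$ in $B^{(\ell)}$. Beyond Gaussian integration of $W^{(L+1)}$ and the standard ReLU simplifications, no further ingredients are required.
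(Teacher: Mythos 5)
Your proposal is correct and follows essentially the same route as the paper: linearize the GD step via the chain rule, square, use the independence and normalization of $y$ to produce the additive split, and integrate out the Gaussian $W^{(L+1)}$ by Wick's theorem (one pairing giving $B^{(\ell)}$, three pairings for the quartic piece giving $A^{(\ell)}$, two of which coincide to produce the factor of $2$). The only difference is the cosmetic ordering of the $y$-average versus the $W^{(L+1)}$-integration.
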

\begin{proof}[Proof of \autoref{lem:lemma_one}]
   We first expand  $\Delta z_{i;\alpha}^{(\ell)}$ by applying the chain rule:
   \begin{align}\label{eq:chain_rule}
      \Delta z_{i;\alpha}^{(\ell)}= \sum_{\mu\leq \ell} \cdot\partial_\mu z_{i;\alpha}^{(\ell)}\Delta \mu,
   \end{align}
   where $\Delta \mu$ is the change in $\mu$ after one step of GD. The SGD update satisfies:
  \begin{align}\label{eq:sgd_proof}
     \Delta \mu = - \eta_\mu \partial_\mu \left\{\frac{1}{2}\norm{z_\alpha^{(L+1)}-y_\alpha}^2\right\} = -\eta_\mu \sum_{k=1}^{n_{L+1}} \partial_\mu z_{k;\alpha}^{(L+1)} \lr{z_{k;\alpha}^{(L+1)} - y_{k;\alpha}}.
  \end{align}
We now combine \eqref{eq:chain_rule} and \eqref{eq:sgd_proof} to obtain: 
\begin{align}\label{eq:deltaz_final_lem1}
   \Delta z_{i;\alpha}^{(\ell)}= \sum_{\mu\leq \ell}\sum_{k=1}^{n_{L+1}} \eta_\mu \partial_\mu z_{i;\alpha}^{(\ell)} \partial_\mu z_{k;\alpha}^{(L+1)} \lr{y_{k;\alpha}-z_{k;\alpha}^{(L+1)}}.
\end{align}
Using \eqref{eq:deltaz_final_lem1}, we obtain
\begin{align}
    \E{\lr{\Delta z_{i;\alpha}^{(\ell)}}^2}
    =& \E{\lr{\sum_{\mu \leq \ell}\eta_\mu \partial_\mu z_{1;\alpha}^{(\ell)}\partial_\mu z_{1;\alpha}^{(L+1)}\lr{z_{1;\alpha}^{(L+1)}-y_{1;\alpha}}}^2}\notag  \\
  =&\E{\sum_{\mu_1,\mu_2\leq \ell} \eta_{\mu_1}\eta_{\mu_2} \partial_{\mu_1}z_{1;\alpha}^{(\ell)}\partial_{\mu_2}z_{1;\alpha}^{(\ell)}\partial_{\mu_1}z_{1;\alpha}^{(L+1)}\partial_{\mu_2}z_{1;\alpha}^{(L+1)}\mathbb{E}_{y}\left[\lr{z_{1;\alpha}^{(L+1)}-y_{1;\alpha}}^2\right]}.  \label{E:2nd-moment-AB}  %&=\E{\sum_{\mu_1,\mu_2\leq \ell} \eta_{\mu_1}\eta_{\mu_2} \partial_{\mu_1}z_{1;\alpha}^{(\ell)}\partial_{\mu_2}z_{1;\alpha}^{(\ell)}\partial_{\mu_1}z_{1;\alpha}^{(L+1)}\partial_{\mu_2}z_{1;\alpha}^{(L+1)}\lr{\lr{z_{1;\alpha}^{(L+1)}}^2+1}}\\
%n \label{E:2nd-moment-AB}   &=: A^{(\ell)}+B^{(\ell)},
\end{align}
Given the distribution of $z_{1;\alpha}^{(L+1)}$ and $y$, we have 
\begin{align} \label{eq:expzy}\mathbb{E}_{y}\left[\lr{z_{1;\alpha}^{(L+1)}-y}^2\right] = (z_{1;\alpha}^{(L+1)})^2 + 1
\end{align}
We plug \eqref{eq:expzy} in \eqref{E:2nd-moment-AB}  and obtain 
 \begin{align}
     \mathbb{E}[(\Delta  z_{i;\alpha}^{(\ell)})^2]=A^{(\ell)}+B^{(\ell)},
  \end{align}
  where
\begin{align}
    \label{E:A-def} A^{(\ell)} &= \E{\sum_{\mu_1,\mu_2\leq \ell} \eta_{\mu_1}\eta_{\mu_2} \partial_{\mu_1}z_{1;\alpha}^{(\ell)}\partial_{\mu_2}z_{1;\alpha}^{(\ell)}\partial_{\mu_1}z_{1;\alpha}^{(L+1)}\partial_{\mu_2}z_{1;\alpha}^{(L+1)}\lr{z_{1;\alpha}^{(L+1)}}^2}\\
\label{E:B-def}    B^{(\ell)} &= \E{\sum_{\mu_1,\mu_2\leq \ell} \eta_{\mu_1}\eta_{\mu_2} \partial_{\mu_1}z_{1;\alpha}^{(\ell)}\partial_{\mu_2}z_{1;\alpha}^{(\ell)}\partial_{\mu_1}z_{1;\alpha}^{(L+1)}\partial_{\mu_2}z_{1;\alpha}^{(L+1)}}.
\end{align}
We integrate out the weights in layer $L+1$ in \eqref{E:A-def} and \eqref{E:B-def} which yields the stated result.

%We finally apply \textcolor{red}{Lemma ***} to integrate out the weights in layer $L+1$ in \eqref{E:A-def} and \eqref{E:B-def}. This yields the aimed result.

\end{proof}

\begin{lemma}\label{lem:lemma_two}
 For any depth $\ell\leq L$,  the constant $A^{(\ell)}$ in \autoref{lem:lemma_one} satisfies $A^{(\ell)}=O(n^{-1})$.
\end{lemma}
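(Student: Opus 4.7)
The plan is to exploit the extra $n_L^{-2}$ prefactor in $A^{(\ell)}$ relative to $B^{(\ell)}$, which reflects the $O(n_L^{-1})$ size of $\lr{z_{1;\alpha}^{(L+1)}}^2$ at mean-field initialization, and to show that the two additional $j$-sums inside the braces do not fully cancel this suppression. Concretely, I would first introduce the normalized bilinear and linear objects
\begin{align*}
K_{\mu_1\mu_2}^{(L)} &:= \frac{1}{n_L}\sum_{j=1}^{n_L} \partial_{\mu_1}z_{j;\alpha}^{(L)}\partial_{\mu_2}z_{j;\alpha}^{(L)},\qquad N^{(L)} := \frac{1}{n_L}\norm{z_\alpha^{(L)}}^2,\\
S_\mu^{(L)} &:= \frac{1}{n_L}\sum_{j=1}^{n_L} z_{j;\alpha}^{(L)}\partial_\mu z_{j;\alpha}^{(L)} = \frac{1}{2n_L}\partial_\mu \norm{z_\alpha^{(L)}}^2,
\end{align*}
and factor the $j_1,j_2$-sums in $A^{(\ell)}$ to obtain
\begin{equation*}
A^{(\ell)} = \frac{1}{n_L^2}\E{\sum_{\mu_1,\mu_2\leq\ell}\eta_{\mu_1}\eta_{\mu_2}\partial_{\mu_1}z_{1;\alpha}^{(\ell)}\partial_{\mu_2}z_{1;\alpha}^{(\ell)}\lr{N^{(L)}K_{\mu_1\mu_2}^{(L)}+2S_{\mu_1}^{(L)}S_{\mu_2}^{(L)}}}.
\end{equation*}
The expression inside the expectation reduces to exactly $B^{(\ell)}$ if $N^{(L)}$ is replaced by $1$ and the $S$-term is dropped, so the task reduces to showing that neither correction inflates the $n_L^{-2}\,B^{(\ell)}$ baseline by more than a constant.

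For the $N^{(L)}K^{(L)}_{\mu_1\mu_2}$ contribution I would split $N^{(L)} = \E{N^{(L)}} + (N^{(L)}-\E{N^{(L)}})$. The He identity \eqref{eq:He-init} gives $\E{N^{(L)}} = \norm{x_\alpha}^2/n_0 = O(1)$ under the normalization of Theorem \ref{thm:main}, so the mean part equals $\E{N^{(L)}}\,B^{(\ell)}/n_L^2$, which is $O(n^{-1})$ in the regime of Theorem \ref{thm:main} with the hidden constants polynomial in $\ell$. The covariance remainder is handled by Cauchy--Schwarz against $\Var(N^{(L)})=O(n^{-1})$, which is a standard second-moment estimate for squared ReLU pre-activation norms under He initialization. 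For the cross contribution involving $S_\mu^{(L)}$, the product structure collapses to a square,
\[
\sum_{\mu_1,\mu_2}\eta_{\mu_1}\eta_{\mu_2}\partial_{\mu_1}z_{1;\alpha}^{(\ell)}\partial_{\mu_2}z_{1;\alpha}^{(\ell)}S_{\mu_1}^{(L)}S_{\mu_2}^{(L)} = \lr{\sum_{\mu\leq\ell}\eta_\mu\partial_\mu z_{1;\alpha}^{(\ell)}\,S_\mu^{(L)}}^{2},
\]
which I would bound by Cauchy--Schwarz on the $\mu$-sum together with a second-moment estimate on $S_\mu^{(L)}$: since $S_\mu^{(L)}$ is proportional to the gradient of $\norm{z_\alpha^{(L)}}^2/(2n_L)$, a quantity whose fluctuations are of relative size $n_L^{-1/2}$ around its He-mean, the resulting estimate is again $O(n^{-1})$.

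The main obstacle is the cross piece, since it is the second moment of a \emph{gradient} of a concentrated quantity rather than of the concentrated quantity itself --- fluctuations of $\norm{z_\alpha^{(L)}}^2$ do not directly translate into pointwise smallness of $S_\mu^{(L)}$. I would pursue a Cauchy--Schwarz comparison to a $B^{(\ell)}$-type object, using the identity $\sum_\mu \eta_\mu (S_\mu^{(L)})^2 = \frac{1}{4n_L^2}\sum_\mu \eta_\mu (\partial_\mu \norm{z_\alpha^{(L)}}^2)^2$ and bounding the latter through He-preservation of squared norms; this is much cleaner than a full diagrammatic moment expansion exploiting Gaussianity of the weights and the layerwise ReLU sign symmetry, and suffices for the qualitative $O(n^{-1})$ conclusion of the lemma.
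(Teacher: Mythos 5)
Your central idea --- that $A^{(\ell)}$ carries an extra factor of $1/n_L$ relative to $B^{(\ell)}$ and that the remaining normalized factors stay bounded --- is exactly the paper's (one-sentence) argument, and your factorization of the $j_1,j_2$-sums into $N^{(L)}K^{(L)}_{\mu_1\mu_2}+2S^{(L)}_{\mu_1}S^{(L)}_{\mu_2}$ is correct. Indeed, your two pieces are precisely the collective observables the paper already tracks: $\frac{1}{n_L}\E{\sum_{\mu}\eta_\mu (S^{(L)}_\mu)^2}$ is the quantity $\twiddle{C}^{(L)}$ of Lemma \ref{lem:lemma_six}, shown to be $O(n^{-1})$ in Lemma \ref{lem:CCtwiddle}, and the $N^{(L)}K^{(L)}_{\mu_1\mu_2}$ piece is a $C$-type object. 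The route the paper intends is to run the same conditional-expectation recursions (as in Lemmas \ref{lem:lemmatwoprime} and \ref{lem:lemma_six}--\ref{lem:CCtwiddle}) on these objects, which works entirely at the level of first moments of the collective observables, with the extra $1/n_L$ carried through every step.

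Two of your specific technical claims do not hold, and they matter. First, $\Var(N^{(L)})=O(n^{-1})$ is false in the regime of Theorem \ref{thm:main}: the relative variance of $\frac{1}{n_L}\norm{z^{(L)}_\alpha}^2$ compounds across layers and is of order $\sum_{\ell'} 1/n_{\ell'}=\Theta(L/n)$, which is bounded but not vanishing when $L\asymp n$ --- this is exactly the content of Proposition \ref{prop:relu-4}, whose point is that the fourth moment is $\Theta(1)$ times the squared second moment rather than $1+o(1)$ times it. Second, and more seriously, your Cauchy--Schwarz splittings require second moments of the $\mu$-summed quadratic forms, e.g.\ $\E{\lr{\sum_{\mu_1,\mu_2\leq\ell}\eta_{\mu_1}\eta_{\mu_2}\partial_{\mu_1}z^{(\ell)}_{1;\alpha}\partial_{\mu_2}z^{(\ell)}_{1;\alpha}K^{(L)}_{\mu_1\mu_2}}^2}$, which are eighth-order moments in the gradients; nothing in the paper (or in your sketch) controls these, and establishing that they are suitably bounded is not easier than the original problem. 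The repair is to avoid Cauchy--Schwarz altogether: condition on $z^{(\ell)}_\alpha$ and integrate out the weights of layers $\ell+1,\ldots,L+1$ exactly as in the proof of Lemma \ref{lem:lemmatwoprime}, so that only first moments of $K$, $N$, and $S_{\mu_1}S_{\mu_2}$ ever appear, and then invoke the recursions for $C^{(\ell)}$ and $\twiddle{C}^{(\ell)}$; the prefactor $1/n_L^2$ in place of $1/n_L$ then yields $A^{(\ell)}=O(n^{-1})$ directly.
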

\begin{proof}[Proof of \autoref{lem:lemma_two}]
The result is obtained essentially the same analysis at we apply to $B^{(\ell)}$ below combined with the observation that there is an extra $1/n_L$ in front of $A^{(\ell)}$ compared with $B^{(\ell)}$.
\end{proof}

\autoref{lem:lemma_two} indicates that we may neglect the contribution of $A^{(\ell)}$ in \autoref{lem:lemma_one}. We now focus on obtaining a recursive description for $B^{(\ell)}$.

\begin{lemma}\label{lem:lemmatwoprime} For any depth $\ell\leq L$, the constant $B^{(\ell)}$ in \autoref{lem:lemma_one} satisfies
\begin{align}\label{eq:bl_simplification1}
   B^{(\ell)}=\E{\frac{1}{n_L}\sum_{\mu_1,\mu_2\leq \ell}\eta_{\mu_1}\eta_{\mu_2} \frac{1}{n_\ell^2}\sum_{j_1,j_2=1}^{n_\ell} \partial_{\mu_1}z_{j_1;\alpha}^{(\ell)}\partial_{\mu_2}z_{j_1;\alpha}^{(\ell)}\partial_{\mu_1}z_{j_2;\alpha}^{(\ell)}\partial_{\mu_2}z_{j_2;\alpha}^{(\ell)}}.
\end{align}

\end{lemma}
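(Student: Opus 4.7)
The plan is to transform the layer-$L$ backward factor
$\frac{1}{n_L}\sum_{j=1}^{n_L}\partial_{\mu_1}z_{j;\alpha}^{(L)}\partial_{\mu_2}z_{j;\alpha}^{(L)}$ in $B^{(\ell)}$ into a layer-$\ell$ object by three steps:
(i) chain-rule expansion back to layer $\ell$;
(ii) a backward-kernel identity, proved by iterating the He variance together with a ReLU sign-flip symmetry;
and (iii) a neuron-permutation symmetrization of the factor $\partial_{\mu_1}z_{1;\alpha}^{(\ell)}\partial_{\mu_2}z_{1;\alpha}^{(\ell)}$.

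For (i), introduce $M^{(L,\ell)}_{jk}:=\partial_{z_k^{(\ell)}}z_j^{(L)}$. The chain rule gives $\partial_\mu z_j^{(L)}=\sum_{k=1}^{n_\ell} M^{(L,\ell)}_{jk}\partial_\mu z_k^{(\ell)}$ for any $\mu$ in a layer at most $\ell$, so
\[
\frac{1}{n_L}\sum_{j=1}^{n_L}\partial_{\mu_1}z_j^{(L)}\partial_{\mu_2}z_j^{(L)}=\sum_{k_1,k_2=1}^{n_\ell}K_{k_1 k_2}\,\partial_{\mu_1}z_{k_1}^{(\ell)}\partial_{\mu_2}z_{k_2}^{(\ell)},\qquad K_{k_1k_2}:=\frac{1}{n_L}\sum_{j=1}^{n_L}M^{(L,\ell)}_{jk_1}M^{(L,\ell)}_{jk_2}.
\]
For (ii), I would prove $\E{K_{k_1k_2}\mid z^{(\ell)},W^{(1)},\ldots,W^{(\ell)}}=n_\ell^{-1}\delta_{k_1k_2}$ by downward induction on $\ell'\in[\ell,L]$, using $K^{(\ell')}_{k_1k_2}:=\frac{1}{n_{\ell'}}\sum_i M^{(\ell',\ell)}_{ik_1}M^{(\ell',\ell)}_{ik_2}$, with base case $K^{(\ell)}_{k_1k_2}=n_\ell^{-1}\delta_{k_1k_2}$. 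For the inductive step $\ell'\rightarrow \ell'+1$, first integrate over $W^{(\ell'+1)}$ using $\E{W^{(\ell'+1)}_{ji_1}W^{(\ell'+1)}_{ji_2}}=\frac{2}{n_{\ell'}}\delta_{i_1 i_2}$ together with $(\sigma')^2=\sigma'$ to obtain $\frac{2}{n_{\ell'}}\sum_i \sigma'(z_i^{(\ell')})M^{(\ell',\ell)}_{ik_1}M^{(\ell',\ell)}_{ik_2}$; then apply the measure-preserving sign-flip $W^{(\ell')}_{i\cdot}\mapsto -W^{(\ell')}_{i\cdot}$ on each row independently. Under this flip $z_i^{(\ell')}\mapsto -z_i^{(\ell')}$, so $\sigma'(z_i^{(\ell')})\mapsto 1-\sigma'(z_i^{(\ell')})$, while $M^{(\ell',\ell)}_{ik_1}M^{(\ell',\ell)}_{ik_2}$ is invariant (both factors change sign). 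Averaging over the flip gives $\E{\sigma'(z_i^{(\ell')})M^{(\ell',\ell)}_{ik_1}M^{(\ell',\ell)}_{ik_2}}=\frac{1}{2}\E{M^{(\ell',\ell)}_{ik_1}M^{(\ell',\ell)}_{ik_2}}$, so the factor $2$ from He init cancels and $\E{K^{(\ell'+1)}}=\E{K^{(\ell')}}$.

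Substituting this identity into the tower expectation in $B^{(\ell)}$ collapses the $(k_1,k_2)$ sum to $\frac{1}{n_\ell}\sum_k\partial_{\mu_1}z_k^{(\ell)}\partial_{\mu_2}z_k^{(\ell)}$. For step (iii), I would replace the remaining $\partial_{\mu_1}z_{1;\alpha}^{(\ell)}\partial_{\mu_2}z_{1;\alpha}^{(\ell)}$ by $\frac{1}{n_\ell}\sum_{j_1}\partial_{\mu_1}z_{j_1;\alpha}^{(\ell)}\partial_{\mu_2}z_{j_1;\alpha}^{(\ell)}$ by invoking the permutation symmetry of the joint law of $(W^{(\ell)},W^{(\ell+1)})$ under simultaneous row/column permutations of the layer-$\ell$ neurons: the full sum $\sum_{\mu_1,\mu_2\leq\ell}\eta_{\mu_1}\eta_{\mu_2}(\cdots)$ (with $\eta_\mu$ depending only on the layer of $\mu$) is invariant under the induced relabeling of parameter indices, so the neuron-$1$ index may be averaged over $[n_\ell]$ inside the expectation without changing the value. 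Combining yields the double average $\frac{1}{n_\ell^2}\sum_{j_1,j_2}$ claimed in the lemma. The main obstacle is (ii): the induction must be carried out on the full matrix kernel $K^{(\ell')}_{k_1 k_2}$ (rather than just its trace), since pairing the He factor $2/n_{\ell'}$ with the sign-flip factor $1/2$ at each layer is what produces the exact finite-width equality and avoids an error term.
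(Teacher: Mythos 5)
Your proposal is correct and follows essentially the same route as the paper: the paper also proves the backward identity $\E{\frac{1}{n_L}\sum_{j}\partial_{\mu_1}z_{j;\alpha}^{(L)}\partial_{\mu_2}z_{j;\alpha}^{(L)}\mid z_\alpha^{(\ell)}}=\frac{1}{n_\ell}\sum_{j}\partial_{\mu_1}z_{j;\alpha}^{(\ell)}\partial_{\mu_2}z_{j;\alpha}^{(\ell)}$ by peeling off one layer at a time, using $\E{W^{(\ell'+1)}_{jk_1}W^{(\ell'+1)}_{jk_2}}=\frac{2}{n_{\ell'}}\delta_{k_1k_2}$ together with the $W\mapsto -W$ sign-flip to make $\sigma'$ an independent Bernoulli$(1/2)$ that cancels the He factor of $2$, exactly as in your step (ii). Your reformulation via the Jacobian kernel $K_{k_1k_2}$ and your explicit neuron-permutation argument in step (iii) (which the paper leaves implicit when passing from the single index $j_1=1$ to the average over $j_1$) are presentational refinements rather than a different method.
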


\begin{proof}[Proof of \autoref{lem:lemmatwoprime}]
The idea of this proof is to condition on $z_\alpha^{(\ell)}$ and integrate out weights in layers $\ell+1,\ldots, L$ to obtain
\begin{align}\label{eq:expcondlemmaA3}
  \E{  \frac{1}{n_L}\sum_{j=1}^{n_L} \partial_{\mu_1}z_{j;\alpha}^{(L)}\partial_{\mu_2}z_{j;\alpha}^{(L)}~\bigg|~z_\alpha^{(\ell)}} = \frac{1}{n_\ell}\sum_{j=1}^{n_\ell} \partial_{\mu_1}z_{j;\alpha}^{(\ell)}\partial_{\mu_2}z_{j;\alpha}^{(\ell)}.
\end{align}
This will yield the result once we plug \eqref{eq:expcondlemmaA3} into \eqref{label:Blinit}. To see \eqref{eq:expcondlemmaA3}, we proceed by induction on $L$ starting with $\ell = L$. In this case, the result is trivial. Suppose now $\ell < L$. Then we have
\begin{align*}
    &\E{  \frac{1}{n_L}\sum_{j=1}^{n_L} \partial_{\mu_1}z_{j;\alpha}^{(L)}\partial_{\mu_2}z_{j;\alpha}^{(L)}~\bigg|~z_\alpha^{(\ell)}}\\
    &\qquad =  \E{  \frac{1}{n_L}\sum_{j=1}^{n_L}\sum_{k_1,k_2=1}^{n_{L-1}} W_{jk_1}^{(L)}W_{jk_2}^{(L)}\partial_{\mu_1}\sigma\lr{z_{k_1;\alpha}^{(L-1)}}\partial_{\mu_2}\sigma\lr{z_{k_2;\alpha}^{(L-1)}} ~\bigg|~z_\alpha^{(\ell)}}\\
    &\qquad =  \E{  \frac{1}{n_L}\sum_{j=1}^{n_L}\frac{2}{n_{L-1}}\sum_{k=1}^{n_{L-1}} \partial_{\mu_1}\sigma\lr{z_{k;\alpha}^{(L-1)}}\partial_{\mu_2}\sigma\lr{z_{k;\alpha}^{(L-1)}} ~\bigg|~z_\alpha^{(\ell)}}\\
    &\qquad =  \E{ \frac{2}{n_{L-1}}\sum_{k=1}^{n_{L-1}} \lr{\sigma'\lr{z_{k;\alpha}^{(L-1)}}}^2 \partial_{\mu_1}z_{k;\alpha}^{(L-1)}\partial_{\mu_2}z_{k;\alpha}^{(L-1)} ~\bigg|~z_\alpha^{(\ell)}}\\
    &\qquad =\frac{1}{n_{L=1}}\sum_{k=1}^{n_{L-1}} \partial_{\mu_1}z_{k;\alpha}^{(L-1)}\partial_{\mu_2}z_{k;\alpha}^{(L-1)},
\end{align*}
where in the last equality we use that $\sigma'(z_{k;\alpha}^{(\ell)})$ is distributed according to a Bernoulli $1/2$ random variable and is independent of $\partial_{\mu_1}z_{k;\alpha}^{(L-1)}\partial_{\mu_2}z_{k;\alpha}^{(L-1)}$ (this can be seen by symmetrizing $W^{(L-1)}\gives - W^{(L-1)}$).
\end{proof}

\noindent Our next step is to derive a recursion for $B^{(\ell+1)}$ in terms of $B^{(\ell)}$. This is done in Lemma \ref{lem:lemma_three} below, which relies on the following result:
\begin{proposition}\label{prop:relu-4}
Consider a random ReLU network with input dimension $n_0$, $L$ hidden layers of widths $n_1,\ldots, n_L$, and output dimension $n_{L+1}$ as in \eqref{eq:z-def}. Suppose that
\[
\frac{1}{n_1}+\cdots +\frac{1}{n_L}\leq c_1
\]
for some $c_1>0$. For any fixed network input $x_\alpha\in \R^{n_0}$ and any $\ell=1,\ldots, L$ we have
\begin{equation}\label{eq:relu-4}
    \E{\frac{1}{n_\ell}\sum_{j=1}^{n_{\ell}}\lr{z_{j;\alpha}^{(\alpha)}}^4} = \Theta\lr{\frac{1}{n_0^2}\norm{x_\alpha}^4},
    %\E{\frac{1}{n_\ell}\sum_{j=1}^{n_{\ell}}\lr{z_{j;\alpha}^{(\alpha)}}^4} = \Theta\lr{\frac{1}{n_0^2}\norm{x_\alpha}^4},
\end{equation}
where the implicit constants depend on $c_1$ but are otherwise independent are $n$,$\ell$.
\end{proposition}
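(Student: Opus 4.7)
The plan is to reduce the empirical fourth moment to a single scalar $m_\ell := \E{(z_{1;\alpha}^{(\ell)})^4}$ and then derive a clean multiplicative recursion for $m_\ell$ across layers. Because the weight matrices $W^{(\ell)}$ have iid entries, the law of $(z_{j;\alpha}^{(\ell)})_{j=1}^{n_\ell}$ is exchangeable in $j$, so $\E{\frac{1}{n_\ell}\sum_j (z_{j;\alpha}^{(\ell)})^4}=m_\ell$ and it suffices to show that $m_\ell = \Theta(\|x_\alpha\|^4/n_0^2)$ with constants depending only on $c_1$.

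The core of the argument is a single-layer Gaussian integration. Conditionally on $z_\alpha^{(\ell-1)}$ (or on $x_\alpha$ when $\ell=1$), the pre-activations $\{z_{j;\alpha}^{(\ell)}\}_j$ are iid centered Gaussians with common variance
\[
v_\ell := \frac{2}{n_{\ell-1}}\sum_{k=1}^{n_{\ell-1}} \sigma(z_{k;\alpha}^{(\ell-1)})^2.
\]
The Gaussian moment identity $\E{Z^4}=3(\Var Z)^2$ then gives $m_\ell = 3\,\E{v_\ell^2}$, so the problem reduces to controlling $\E{S_{\ell-1}^2}$ with $S_{\ell-1} := \sum_k \sigma(z_{k;\alpha}^{(\ell-1)})^2$.

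To obtain a recursion, I would expand $S_{\ell-1}^2$ into diagonal ($k_1=k_2$) and off-diagonal ($k_1\ne k_2$) pieces. The diagonal is handled by the $W^{(\ell-1)}\gives -W^{(\ell-1)}$ symmetry used in the proof of \autoref{lem:lemmatwoprime}, yielding $\E{\sigma(z_{k;\alpha}^{(\ell-1)})^4}=\tfrac{1}{2}m_{\ell-1}$. For the off-diagonal, a second conditioning on $z_\alpha^{(\ell-2)}$ makes $z_{k_1;\alpha}^{(\ell-1)}$ and $z_{k_2;\alpha}^{(\ell-1)}$ conditionally independent (different rows of $W^{(\ell-1)}$) centered Gaussians with common variance $v_{\ell-1}$; each factor contributes $v_{\ell-1}/2$, so the product averages to $\E{v_{\ell-1}^2}/4 = m_{\ell-1}/12$. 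Summing $n_{\ell-1}$ diagonal and $n_{\ell-1}(n_{\ell-1}-1)$ off-diagonal pairs and substituting into $m_\ell = (12/n_{\ell-1}^2)\E{S_{\ell-1}^2}$, the pair-count cancels against the prefactor to yield the telescoping recursion
\[
m_\ell = \lr{1+\frac{5}{n_{\ell-1}}}\,m_{\ell-1},\qquad \ell \geq 2,
\]
with base case $m_1 = 12\|x_\alpha\|^4/n_0^2$ coming directly from $z_{1;\alpha}^{(1)} \mid x_\alpha \sim \mathcal{N}(0,2\|x_\alpha\|^2/n_0)$.

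Iterating gives the closed form $m_\ell = \frac{12\|x_\alpha\|^4}{n_0^2}\prod_{k=1}^{\ell-1}(1+5/n_k)$, and the hypothesis $\sum_k 1/n_k \leq c_1$ pins this product between $1$ and $e^{5c_1}$, proving the claim uniformly in $\ell$ with constants depending only on $c_1$. The main obstacle I anticipate is bookkeeping through the two nested conditionings: one has to track simultaneously the factors of $n_{\ell-1}$ from the pair-count, the $1/n_{\ell-1}^2$ from the Gaussian integration, and the two ReLU constants ($\tfrac12$ diagonal, $\tfrac14$ off-diagonal), and verify that they combine so that the recursion factor is exactly $1+O(1/n_{\ell-1})$---this is what makes the telescoping product bounded by $e^{O(c_1)}$ rather than growing exponentially in $\ell$.
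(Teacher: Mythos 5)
Your argument is correct, and all the constants check out: conditioning on the previous layer gives $m_\ell = 3\E{v_\ell^2} = \tfrac{12}{n_{\ell-1}^2}\E{S_{\ell-1}^2}$; the diagonal terms contribute $n_{\ell-1}\cdot\tfrac12 m_{\ell-1}$ by symmetry of the conditional Gaussian law; the off-diagonal terms contribute $n_{\ell-1}(n_{\ell-1}-1)\cdot\tfrac{1}{12}m_{\ell-1}$ via the second conditioning (using $\E{v_{\ell-1}^2}=m_{\ell-1}/3$); and these combine to exactly $m_\ell = (1+5/n_{\ell-1})m_{\ell-1}$, so the hypothesis $\sum_k 1/n_k\le c_1$ pins the product in $[1,e^{5c_1}]$. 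The route is genuinely different from the paper's only in that the paper offers no argument at all: it discharges the proposition by citing Theorem 1 of \cite{hanin2018neural}, which establishes this moment bound (in fact the concentration of the squared-activation sums, with the same $\sum_\ell 1/n_\ell$ condition) by essentially the recursion you have rederived. What your version buys is a short, self-contained proof with explicit constants ($12 \le m_\ell\,n_0^2/\norm{x_\alpha}^4 \le 12e^{5c_1}$) specialized to the fourth moment and to He initialization; what the citation buys is generality (the reference handles higher moments and more general architectures) at the cost of opacity. Two minor points to tighten when writing this up: state explicitly that the base case of the off-diagonal computation ($\ell=2$, conditioning on $z^{(0)}=x_\alpha$, which is deterministic) is consistent with the general step, and note that the exchangeability reduction to $m_\ell$ holds because the rows of $W^{(\ell)}$ are iid given the previous layer.
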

\begin{proof}
This result is proved in Theorem 1 \cite{hanin2018neural}.
\end{proof}

\noindent We have the following result.
\begin{lemma}\label{lem:lemma_three}
For any depth $\ell\leq L$,  $B^{(\ell)}$   satisfies the following recursion:
\begin{equation}
\begin{aligned}\label{eq:Blrecursionfinal}
   B^{(\ell)} &= 
   \Theta\lr{\frac{(\eta_W^{(\ell)})^2n_{\ell-1}^2}{n_Ln_\ell}\frac{1}{n_0^2}\norm{x_\alpha}^4 }+\frac{\eta_W^{(\ell)}n_{\ell-1}}{n_\ell}C^{(\ell-1)}
   %\Theta\lr{\frac{(\eta_W^{(\ell)})^2n_{\ell-1}^2}{n_Ln_\ell}\frac{1}{n_0^2}\norm{x_\alpha}^4 e^{5\sum_{\ell'=1}^{\ell-2}\frac{1}{n_{\ell'}}}}+\frac{\eta_W^{(\ell)}n_{\ell-1}}{n_\ell}C^{(\ell-1)}\\
   +\frac{1}{n_\ell}\twiddle{B}^{(\ell-1)}+\lr{1+\frac{1}{n_\ell}}B^{(\ell-1)},
\end{aligned}
\end{equation}
where $C^{(\ell)},\twiddle{B}^{(\ell)}>0$ are  defined as follows:
\begin{align}
    \label{E:twiddleB-form}\twiddle{B}^{(\ell)}&:=\frac{1}{n_{\ell+1}}\E{\frac{1}{n_L}\sum_{\mu_1,\mu_2\leq \ell} \eta_{\mu_1}\eta_{\mu_2} \frac{1}{n_{\ell}^2}\sum_{j_1,j_2=1}^{n_{\ell}}\lr{\partial_{\mu_1}z_{j_1;\alpha}^{(\ell)}\partial_{\mu_2}z_{j_2;\alpha}^{(\ell)}}^2},\\
    \label{E:C-form}C^{(\ell)}&:=\E{ \frac{1}{n_L}\sum_{\mu\leq \ell}\eta_{\mu} \frac{1}{n_{\ell}^2}\sum_{j_1,j_2=1}^{n_{\ell}} \lr{z_{j_1;\alpha}^{(\ell)}\partial_{\mu}z_{j_2;\alpha}^{(\ell)}}^2}.%\\
    %\twiddle{C}^{(\ell)}&:=\E{ \frac{1}{n_L}\sum_{\mu\leq \ell}\eta_{\mu} \frac{1}{n_{\ell}}\sum_{j_1,j_2=1}^{n_{\ell}} z_{j_1}^{(\ell)}\partial_{\mu}z_{j_1}^{(\ell)}z_{j_1}^{(\ell)}\partial_{\mu}z_{j_2}^{(\ell)}}.
\end{align}
\end{lemma}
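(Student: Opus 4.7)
I start from the simplified form of $B^{(\ell)}$ in Lemma~\ref{lem:lemmatwoprime}, substitute the explicit dependence of $z_j^{(\ell)}$ and $\partial_\mu z_j^{(\ell)}$ on the layer-$\ell$ weights, and then integrate out $W^{(\ell)}$ using Isserlis's formula for Gaussian moments. The parameter sum $\sum_{\mu_1,\mu_2\leq \ell}$ splits into three regimes: (A) both $\mu_1,\mu_2$ lie in layers $\leq \ell-1$; (B) exactly one of them is a layer-$\ell$ weight (with a factor $2$ by symmetry); (C) both lie in layer $\ell$. These three regimes will account respectively for the $B^{(\ell-1)}$ and $\widetilde{B}^{(\ell-1)}$ contributions, the $C^{(\ell-1)}$ contribution, and the leading $\Theta$ fourth-moment term of the recursion.

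\textbf{Case A.} Here I use $\partial_\mu z_j^{(\ell)}=\sum_k W_{jk}^{(\ell)}\sigma'(z_k^{(\ell-1)})\partial_\mu z_k^{(\ell-1)}$; the quartic product inside $B^{(\ell)}$ becomes a degree-$4$ polynomial in $W^{(\ell)}$ weighted by a quartic in derivatives at layer $\ell-1$ and four ReLU indicators. Isserlis's theorem produces three Wick pairings of the four $W$'s, each with weight $(2/n_{\ell-1})^2$. The pairing $k_1=k_2,\,k_3=k_4$ leaves both $j$-sums free and, after applying $\sigma'(z)^2=\sigma'(z)$ and the Bernoulli-$1/2$ symmetrization from Lemma~\ref{lem:lemmatwoprime} to peel off the remaining $\sigma'$, reproduces exactly $B^{(\ell-1)}$. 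The pairing $j_1=j_2,\,k_1=k_3,\,k_2=k_4$ collapses one $j$-sum (costing a factor $n_\ell$) and yields the squared-derivative pattern matching $\widetilde{B}^{(\ell-1)}$. The pairing $j_1=j_2,\,k_1=k_4,\,k_2=k_3$ has the same structure as the first but with the $j_1=j_2$ collapse, contributing the extra $\tfrac{1}{n_\ell}B^{(\ell-1)}$. Adding the three gives $\big(1+\tfrac1{n_\ell}\big)B^{(\ell-1)}$ plus the $\widetilde{B}^{(\ell-1)}$ piece.

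\textbf{Cases B and C.} In Case B, writing $\mu_1=W_{ab}^{(\ell)}$ with $\mu_2\leq \ell-1$ gives $\partial_{\mu_1}z_j^{(\ell)}=\delta_{a j}\sigma(z_b^{(\ell-1)})$, which forces $j_1=j_2=a$ in the double sum; the two remaining factors $\partial_{\mu_2}z_a^{(\ell)}$ contribute one more $W^{(\ell)}$ pair which Isserlis contracts, producing a single sum $\sum_k(\partial_{\mu_2}z_k^{(\ell-1)})^2$ weighted by $\sigma(z_b^{(\ell-1)})^2$. Replacing $\sigma(z)^2$ by $\tfrac12 z^2$ in expectation via the sign-flip symmetry, summing $a$ over $[n_\ell]$, and absorbing the factor $2$ from the symmetric assignment $\mu_2\in$ layer $\ell$ yields exactly $\tfrac{\eta_W^{(\ell)}n_{\ell-1}}{n_\ell}\,C^{(\ell-1)}$. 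In Case C both pairs of deltas force $a=c$ and $j_1=j_2=a$, collapsing the expression to $\tfrac{(\eta_W^{(\ell)})^2}{n_L n_\ell}\E{\lr{\sum_b\sigma(z_b^{(\ell-1)})^2}^2}$. Splitting the square into diagonal ($b=d$) and off-diagonal parts, controlling the diagonal fourth moment via Proposition~\ref{prop:relu-4} and the off-diagonal piece via the He normalization $\E{\norm{z^{(\ell-1)}}^2}=\norm{x_\alpha}^2$ yields the stated $\Theta$ bound on the leading term.

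\textbf{Main obstacle.} The bookkeeping of the three Wick pairings against the various $1/n_\ell$ prefactors is delicate: the coefficients $1+1/n_\ell$ on $B^{(\ell-1)}$ and the specific prefactor of $\widetilde{B}^{(\ell-1)}$ emerge only if one carefully tracks which pairings impose $j_1=j_2$ versus leave both sums free. The most subtle point is the independence argument: the symmetrization $W^{(\ell-1)}\to -W^{(\ell-1)}$ of Lemma~\ref{lem:lemmatwoprime} makes each $\sigma'(z_k^{(\ell-1)})$ act as an independent Bernoulli $1/2$ relative to the surviving quadratic derivative expression, and a similar sign-flip argument lets us substitute $\sigma(z)^2\mapsto \tfrac12 z^2$ in Case B, but these invariances must be verified separately for each Wick-pairing pattern so that the factor of $1/4$ per pairing is clean. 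Once that is in place, summing Cases A--C yields \eqref{eq:Blrecursionfinal}.
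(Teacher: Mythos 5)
Your proposal is correct and follows essentially the same route as the paper: the same three-way split according to whether $\mu_1,\mu_2$ lie in layer $\ell$ or below, Gaussian integration of the layer-$\ell$ weights to produce the $\Theta$-term (via Proposition \ref{prop:relu-4}), the $C^{(\ell-1)}$ term, and the $\lr{1+\tfrac{1}{n_\ell}}B^{(\ell-1)}+\tfrac{1}{n_\ell}\twiddle{B}^{(\ell-1)}$ terms. Your write-up simply makes explicit the Wick-pairing and sign-flip bookkeeping that the paper leaves implicit.
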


\begin{proof}[Proof of \autoref{lem:lemma_three}]  We distinguish several cases to expand the recursion of $B^{(\ell)}$. If $\mu_1,\mu_2\in \ell$, then the contribution to $B^{(\ell)}$ is
   \begin{align}\label{eq:Brec1}
    \hspace{-.6cm}  \frac{(\eta_W^{(\ell)})^2n_{\ell-1}^2}{n_Ln_\ell}\E{\frac{1}{n_{\ell-1}^2}\sum_{j_1,j_2=1}^{n_\ell-1} \lr{\sigma_{j_1}^{(\ell-1)}\sigma_{j_2}^{(\ell-1)}}^2}=\frac{(\eta_W^{(\ell)})^2n_{\ell-1}^2}{n_Ln_\ell}\Theta\lr{\frac{1}{n_0^2}\norm{x_\alpha}^2} %e^{5\sum_{\ell'=1}^{\ell-2}\frac{1}{n_{\ell'}}}}.
   \end{align}
   Further, if $\mu_1\leq \ell-1$ and $\mu_2\in \ell$ (or vice versa), then the contribution to $B^{(\ell)}$ is
\begin{align}\label{eq:Brec2}
\hspace{-1cm}  2\frac{\eta_W^{(\ell)}n_{\ell-1}}{n_\ell}\E{ \frac{1}{n_L}\sum_{\mu_1\leq \ell-1}\eta_{\mu_1} \frac{1}{n_{\ell-1}}\sum_{k=1}^{n_{\ell-1}} \lr{\sigma_k^{(\ell-1)}}^2 \frac{1}{n_\ell}\sum_{j=1}^{n_\ell}\lr{\partial_{\mu_1}z_j^{(\ell)}}^2} = \frac{\eta_W^{(\ell)}n_{\ell-1}}{n_\ell} C^{(\ell-1)}.
\end{align}
Finally, if $\mu_1,\mu_2\leq \ell-1$, we find the contribution to $B^{(\ell)}$ is
\begin{align}\label{eq:Brec3}
    &\hspace{-1.3cm}\E{\frac{1}{n_L}\sum_{\mu_1,\mu_2\leq \ell-1}\eta_{\mu_1}\eta_{\mu_2}\left\{\frac{1}{n_\ell}\lr{\partial_{\mu_1}z_1^{(\ell)}\partial_{\mu_2}z_1^{(\ell)}}^2+\lr{1-\frac{1}{n_\ell}} \partial_{\mu_1}z_1^{(\ell)}\partial_{\mu_2}z_1^{(\ell)}\partial_{\mu_1}z_2^{(\ell)}\partial_{\mu_2}z_2^{(\ell)}\right\}}\notag\\
    &\hspace{-1.3cm}= \lr{1+\frac{1}{n_\ell}}B^{(\ell-1)}+\frac{1}{n_\ell}\twiddle{B}^{(\ell-1)}.
\end{align}
We adding the contributions \eqref{eq:Brec1}, \eqref{eq:Brec2} and \eqref{eq:Brec3} in \eqref{eq:bl_simplification1} gives the stated result.
\end{proof}

\noindent We now compute the recursion that $\twiddle{B}^{(\ell)}$ satisfies.

\begin{lemma}\label{lem:lemma_four}
For any depth $\ell\leq L$,  $\twiddle{B}^{(\ell)}$ defined in  \eqref{E:twiddleB-form}  satisfies the following recursion:
\begin{equation}\label{eq:tildBrec}
\begin{aligned}
\frac{1}{n_\ell}\twiddle{B}^{(\ell)} &=  \Theta\lr{\frac{(\eta_{W}^{(\ell)})^2n_{\ell-1}^2}{n_Ln_\ell}\frac{\norm{x_\alpha}^4}{n_0^2}}+\frac{\eta_{W}^{(\ell)}n_{\ell-1}}{ n_\ell}C^{(\ell-1)}+\frac{n_{\ell-1}}{n_\ell}\frac{1}{n_{\ell-1}}\twiddle{B}^{(\ell-1)}+\frac{2}{n_\ell^2}B^{(\ell-1)}.
% \Theta\lr{\frac{(\eta_{W}^{(\ell)})^2n_{\ell-1}^2}{n_Ln_\ell}\frac{\norm{x_\alpha}^4}{n_0^2}e^{5\sum_{\ell'=1}^{\ell-2}\frac{1}{n_{\ell'}}}}+\frac{\eta_{W}^{(\ell)}n_{\ell-1}}{ n_\ell}C^{(\ell-1)}+\frac{n_{\ell-1}}{n_\ell}\frac{1}{n_{\ell-1}}\twiddle{B}^{(\ell-1)}\\
%&+\frac{2}{n_\ell^2}B^{(\ell-1)}.
\end{aligned}
\end{equation}
\end{lemma}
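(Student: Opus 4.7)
The plan is to expand $\twiddle{B}^{(\ell)}$ by the same parameter-layer case split that drives the proof of \autoref{lem:lemma_three}. Since the summand factors as $\lr{\partial_{\mu_1}z_{j_1;\alpha}^{(\ell)}\partial_{\mu_2}z_{j_2;\alpha}^{(\ell)}}^2 = (\partial_{\mu_1}z_{j_1;\alpha}^{(\ell)})^2(\partial_{\mu_2}z_{j_2;\alpha}^{(\ell)})^2$, the double sum over $\mu_1,\mu_2\leq \ell$ splits into three pieces: (A) both parameters lie in layer $\ell$; (B) exactly one lies in layer $\ell$, giving a factor of two by swap symmetry; (C) both lie in layers $\leq \ell-1$. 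The four terms on the right of \eqref{eq:tildBrec} will correspond to (A), (B), and the two structurally distinct contributions from (C).

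For case (A) the explicit formula $\partial_{W_{ab}^{(\ell)}}z_{j;\alpha}^{(\ell)}=\delta_{ja}\sigma(z_{b;\alpha}^{(\ell-1)})$ makes the summand factorize; after collapsing the Kronecker deltas one is left with $(\eta_{W}^{(\ell)})^2\E{(\sum_b \sigma(z_{b;\alpha}^{(\ell-1)})^2)^2}$, which \autoref{prop:relu-4} evaluates to $\Theta(n_{\ell-1}^2\norm{x_\alpha}^4/n_0^2)$, producing the leading $\Theta$ term. Case (B) is treated in the same spirit: the layer-$\ell$ derivative contributes $\delta_{j_1 a_1}\sigma(z_{b_1;\alpha}^{(\ell-1)})^2$, while $(\partial_\mu z_{j_2;\alpha}^{(\ell)})^2$ unfolds by the chain rule into a quadratic in $W^{(\ell)}$; integrating out this single pair of $W^{(\ell)}$ factors via $\E{W^{(\ell)}_{j_2 k}W^{(\ell)}_{j_2 k'}}=(2/n_{\ell-1})\delta_{kk'}$, together with the symmetrization argument of \autoref{lem:lemmatwoprime} that decouples $\sigma'(z^{(\ell-1)})^2$ from the remaining derivative factors, leaves an expression matching the definition \eqref{E:C-form} of $C^{(\ell-1)}$, and hence yields the $\eta_W^{(\ell)}n_{\ell-1}/n_\ell \cdot C^{(\ell-1)}$ contribution.

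The main work is case (C). Here both factors $(\partial_{\mu_i}z_{j_i;\alpha}^{(\ell)})^2$ expand by the chain rule into double sums with two $W^{(\ell)}$ entries each, so the summand carries the fourth-order monomial $W^{(\ell)}_{j_1 k_1}W^{(\ell)}_{j_1 k_2}W^{(\ell)}_{j_2 k_3}W^{(\ell)}_{j_2 k_4}$. Applying Isserlis' theorem produces three pairings: the \emph{diagonal} pairing $(k_1,k_2),(k_3,k_4)$ that leaves both $j_1,j_2$ free, and two \emph{cross} pairings that match a $W^{(\ell)}_{j_1\cdot}$ with a $W^{(\ell)}_{j_2\cdot}$ and therefore force $j_1=j_2$. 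The diagonal pairing produces a product of two independent inner sums of the form $n_{\ell-1}^{-1}\sum_k (\partial_{\mu_i} z_k^{(\ell-1)})^2$, which (after once more decoupling the $(\sigma')^2$ factors) assembles into $\twiddle{B}^{(\ell-1)}$ with prefactor $\frac{n_{\ell-1}}{n_\ell}\cdot\frac{1}{n_{\ell-1}}$. Each cross pairing collapses the sum over $j_1,j_2$ to a single index (contributing $1/n_\ell$) and couples the $k$ indices across $\mu_1$ and $\mu_2$, reproducing the quartic form in the definition of $B^{(\ell-1)}$; the two equivalent cross pairings together give the $\frac{2}{n_\ell^2}B^{(\ell-1)}$ term.

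The chief obstacle is the combinatorial bookkeeping in case (C): I must keep track of which of the three Wick pairings contributes to $B^{(\ell-1)}$ versus $\twiddle{B}^{(\ell-1)}$, and I must justify the substitution $\sigma'(z_k^{(\ell-1)})^2\mapsto 1/2$ inside each pairing, independently of the remaining $\partial_\mu z_k^{(\ell-1)}$ factors, using the flip symmetry $W^{(\ell-1)}\to -W^{(\ell-1)}$ that already did this work in \autoref{lem:lemmatwoprime}. Once cases (A), (B) and all three pairings of case (C) are assembled and the prefactors $1/(n_{\ell+1}n_L n_\ell^2)$ are distributed, summing the contributions yields \eqref{eq:tildBrec}.
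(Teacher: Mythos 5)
Your proposal is correct and follows exactly the route the paper intends: its own proof of \autoref{lem:lemma_four} is a one-line appeal to ``the same strategy as \autoref{lem:lemma_three},'' namely the case split on whether $\mu_1,\mu_2$ lie in layer $\ell$ or below, with Wick/Isserlis pairings of the $W^{(\ell)}$ factors and the $W^{(\ell-1)}\to -W^{(\ell-1)}$ symmetrization to decouple the $(\sigma')^2$ terms. Your identification of the diagonal pairing with the $\twiddle{B}^{(\ell-1)}$ term and the two cross pairings (forcing $j_1=j_2$) with the $\frac{2}{n_\ell^2}B^{(\ell-1)}$ term is a faithful and more explicit rendering of what the paper leaves unstated.
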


\begin{proof}[Proof of \autoref{lem:lemma_four}] We apply the same proof strategy as in \autoref{lem:lemma_three} to get the result.
\end{proof}

\noindent Note that \eqref{eq:Blrecursionfinal} and \eqref{eq:tildBrec} also depends on $C^{(\ell)}$. Its recursion is given by the following lemma. 
\begin{lemma}\label{lem:lemma_six} For any depth $\ell\leq L$, $C^{(\ell)}$ defined in \eqref{E:C-form} satisfies the following recursion
\begin{align}\label{eq:celldef}
     C^{(\ell)} = 
     \Theta\lr{\eta_W^{(\ell)} \frac{n_{\ell-1}}{n_L} \frac{\norm{x_\alpha}^4}{n_0^2}} +\frac{1}{n_\ell}C^{(\ell-1)}  +\lr{1+\frac{1}{n_\ell}} \twiddle{C}^{(\ell-1)},
     %\Theta\lr{\eta_W^{(\ell)} \frac{n_{\ell-1}}{n_L} \frac{\norm{x_\alpha}^4}{n_0^2}e^{5\sum_{\ell'=1}^{\ell-1}\frac{1}{n_\ell'}}} +\frac{1}{n_\ell}C^{(\ell-1)}  +\lr{1+\frac{1}{n_\ell}} \twiddle{C}^{(\ell-1)},
\end{align}
where $\twiddle{C}^{(\ell)}>0$ is a sequence defined as
\begin{align}
   \twiddle{C}^{(\ell)} := \frac{1}{n_L}\E{\sum_{\mu\leq \ell}\eta_\mu \frac{1}{n_{\ell}^2}\sum_{j_1,j_2=1}^{n_{\ell}} \partial_\mu z_{j_1}^{(\ell)}  z_{j_1}^{(\ell)} \partial_\mu z_{j_2}^{(\ell)}  z_{j_2}^{(\ell)}}.
\end{align}
\end{lemma}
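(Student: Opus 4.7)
The plan is to derive the recursion for $C^{(\ell)}$ by the same ``peel off one layer'' strategy used in the proof of \autoref{lem:lemma_three}: expand $z_{j_1}^{(\ell)}$ and $\partial_\mu z_{j_2}^{(\ell)}$ in the definition \eqref{E:C-form} through one step of the forward recursion, integrate out the just-exposed Gaussian weights $W^{(\ell)}$ by Isserlis' theorem, and recognize the resulting layer-$(\ell-1)$ quantities as combinations of $C^{(\ell-1)}$ and $\twiddle{C}^{(\ell-1)}$. To do this, first split the sum over $\mu \leq \ell$ in \eqref{E:C-form} into the $\mu = \ell$ piece (parameters inside $W^{(\ell)}$) and the $\mu \leq \ell-1$ piece (parameters in earlier layers).

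For the $\mu = \ell$ piece, use $\partial_{W^{(\ell)}_{j_2 k_0}} z_{j_2}^{(\ell)} = \sigma(z_{k_0}^{(\ell-1)})$ (and $0$ for rows of $W^{(\ell)}$ other than $j_2$). The contribution collapses to a multiple of $\E\bigl[\|z^{(\ell)}\|^2\,\|\sigma(z^{(\ell-1)})\|^2\bigr]$, which is controlled by Cauchy--Schwarz together with \autoref{prop:relu-4} (the fourth-moment estimate on activations). The $\Theta$ term on the right-hand side of \eqref{eq:celldef} then comes out directly.

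For the $\mu \leq \ell-1$ piece, expand $z_{j_1}^{(\ell)} = \sum_{k_1} W^{(\ell)}_{j_1 k_1}\,\sigma(z_{k_1}^{(\ell-1)})$ and $\partial_\mu z_{j_2}^{(\ell)} = \sum_{k_2} W^{(\ell)}_{j_2 k_2}\,\sigma'(z_{k_2}^{(\ell-1)})\,\partial_\mu z_{k_2}^{(\ell-1)}$, square, and take the expectation over $W^{(\ell)}$ (which is independent of everything through layer $\ell-1$). For $j_1 \neq j_2$ (a fraction $1 - 1/n_\ell$ of index pairs) only the factorized pairing $\delta_{k_1 k_3}\delta_{k_2 k_4}$ survives. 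For $j_1 = j_2$ (a fraction $1/n_\ell$) all three Wick pairings contribute: one ``self'' pairing reproduces the factorized form, and the two ``cross'' pairings $(k_1,k_2)(k_3,k_4)$ and $(k_1,k_4)(k_2,k_3)$ collapse to a common expression via the ReLU identity $\sigma(z)\sigma'(z)=\sigma(z)$. Each of these four pieces is then simplified by the sign-flip / Bernoulli-$(1/2)$ symmetry already used in \autoref{lem:lemmatwoprime}: under $W^{(\ell-1)}_{k,:} \mapsto -W^{(\ell-1)}_{k,:}$, the factor $\sigma'(z_k^{(\ell-1)})$ averages to $1/2$ while $z_k^{(\ell-1)}\,\partial_\mu z_k^{(\ell-1)}$ is sign-invariant.

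After these reductions, the factorized-pairing terms match the $z^2(\partial z)^2$ pattern defining $C^{(\ell-1)}$, while the cross-pairing terms match the $z\,z\,\partial z\,\partial z$ pattern defining $\twiddle{C}^{(\ell-1)}$; collecting combinatorial prefactors from the $\sum_{j_1,j_2}$ normalization then yields the recursion \eqref{eq:celldef}. The main obstacle is precisely this last bookkeeping step: one has to keep track of the three Wick pairings in the $j_1 = j_2$ case, verify that the diagonal collisions $k_1 = k_3$ (respectively $k_1 = k_2$) contribute only $O(1/n)$ corrections that are absorbed into the $1/n_\ell$ terms of the recursion, and confirm that the sign-flip reductions produce exactly the coefficients $1/n_\ell$ in front of $C^{(\ell-1)}$ and $1 + 1/n_\ell$ in front of $\twiddle{C}^{(\ell-1)}$.
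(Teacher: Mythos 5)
Your proposal follows essentially the same route as the paper: split the sum over $\mu\leq\ell$ into the $\mu=\ell$ piece (handled via $\partial_{W^{(\ell)}_{jk}}z^{(\ell)}_j=\sigma(z^{(\ell-1)}_k)$ and the fourth-moment bound of \autoref{prop:relu-4}, giving \eqref{eq:eq1CL}) and the $\mu\leq\ell-1$ piece (handled by Wick-integrating $W^{(\ell)}$, separating $j_1\neq j_2$ from $j_1=j_2$, and using the sign-flip/Bernoulli symmetry to land on the $C^{(\ell-1)}$ and $\twiddle{C}^{(\ell-1)}$ patterns, as in \eqref{eq:eq2CL}).

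One warning before you chase the "last bookkeeping step": the coefficients you set out to confirm are almost certainly a typo in the lemma statement. The off-diagonal pairs $j_1\neq j_2$ form a $1-1/n_\ell$ fraction of the double sum and, since distinct rows of $W^{(\ell)}$ are independent, they reproduce the factorized $C$-pattern with coefficient close to $1$; only the diagonal $j_1=j_2$ (a $1/n_\ell$ fraction) generates the cross-pairings that yield the $\twiddle{C}$-pattern. Your computation should therefore produce
$C^{(\ell)}=\Theta\lr{\eta_W^{(\ell)}\frac{n_{\ell-1}}{n_L}\frac{\norm{x_\alpha}^4}{n_0^2}}+\lr{1+O(n_\ell^{-1})}C^{(\ell-1)}+O(n_\ell^{-1})\,\twiddle{C}^{(\ell-1)}$,
i.e.\ with the roles of $C^{(\ell-1)}$ and $\twiddle{C}^{(\ell-1)}$ interchanged relative to \eqref{eq:celldef}. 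This is what the paper's own proof line \eqref{eq:eq2CL} asserts, and it is the version consistent with \autoref{lem:CCtwiddle}, where the recursion is summed over $\ell'\leq\ell$ to get \eqref{eq:penCl} --- that summation only makes sense if $C^{(\ell-1)}$ enters with coefficient $\approx 1$. So do not treat a failure to reproduce the printed coefficients as an error in your Wick bookkeeping.
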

\begin{proof}[Proof of \autoref{lem:lemma_six}]  We distinguish several cases to expand the recursion of $C^{(\ell)}$. If $\mu \in\ell$, the contribution to \eqref{E:C-form} is
    \begin{align}\label{eq:eq1CL}
        \eta_W^{(\ell)} \frac{n_{\ell-1}}{n_L} \E{\frac{1}{n_{\ell-1}^2}\sum_{j_1,j_2=1}^{n_{\ell-1}}\lr{z_{j_1}^{(\ell-1)}z_{j_2}^{(\ell-1)}}^2} =\eta_W^{(\ell)} \Theta\lr{\frac{n_{\ell-1}}{n_L}\frac{\norm{x_\alpha}^4}{n_0^2}}%\eta_W^{(\ell)} \Theta\lr{\frac{n_{\ell-1}}{n_L}\frac{\norm{x_\alpha}^4}{n_0^2}e^{5\sum_{\ell'=1}^{\ell-1}\frac{1}{n_{\ell'}}}}.
    \end{align}
Finally, when $\mu \leq \ell-1$, the contribution to \eqref{E:C-form} is
    \begin{equation}
    \begin{aligned}\label{eq:eq2CL}
    &\frac{1}{n_L}\E{\sum_{\mu\leq \ell-1} \eta_{\mu} \left\{\frac{1}{n_\ell} \lr{\partial_\mu z_1^{(\ell)} z_1^{(\ell)}}^2+ \lr{1-\frac{1}{n_\ell}} \lr{\partial_{\mu} z_1^{(\ell)}}^2\lr{z_2^{(\ell)}}^2\right\}}\\
    =& C^{(\ell-1)}+\frac{1}{n_\ell} \twiddle{C}^{(\ell-1)}.
\end{aligned}
\end{equation}
Combining \eqref{eq:eq1CL} and \eqref{eq:eq2CL} yields the result.
\end{proof}
\noindent We finally find the recursion of $\twiddle{C}^{(\ell)}$ that appears in \eqref{eq:celldef}.

\begin{lemma}\label{lem:lemma_seven}
For any depth $\ell\leq L$,  $\twiddle{C}^{(\ell)}$ satisfies the following recursion:
\begin{align}
\twiddle{C}^{(\ell)}=\Theta\lr{\frac{\eta_W^{(\ell)} n_{\ell-1}}{n_\ell n_L} \frac{\norm{x_\alpha}^4}{n_0^2}}+ \frac{1}{n_\ell}C^{(\ell-1)}+\lr{1+\frac{1}{n_\ell}}\twiddle{C}^{(\ell-1)}.%\Theta\lr{\frac{\eta_W^{(\ell)} n_{\ell-1}}{n_\ell n_L} \frac{\norm{x_\alpha}^4}{n_0^2}e^{5\sum_{\ell'=1}^{\ell-1}\frac{1}{n_{\ell'}}}}+ \frac{1}{n_\ell}C^{(\ell-1)}+\lr{1+\frac{1}{n_\ell}}\twiddle{C}^{(\ell-1)}.
\end{align}
\end{lemma}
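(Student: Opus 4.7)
The plan is to mirror the proof of \autoref{lem:lemma_six}, splitting the outer sum $\sum_{\mu\leq\ell}$ in the definition of $\widetilde{C}^{(\ell)}$ according to whether $\mu$ is a weight in layer $\ell$ or a parameter in an earlier layer, and then exploiting the $S_{n_\ell}$-symmetry of the summand in $j_1,j_2$ to rewrite $\frac{1}{n_\ell^2}\sum_{j_1,j_2}(\cdot)$ as $\frac{1}{n_\ell}(\cdot)_{j_1=j_2=1}+\bigl(1-\frac{1}{n_\ell}\bigr)(\cdot)_{j_1=1,j_2=2}$, exactly as in the derivations leading to \eqref{eq:Brec3} and \eqref{eq:eq2CL}.

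For $\mu=W_{ab}^{(\ell)}$ in layer $\ell$, I would use the identity $\partial_\mu z_j^{(\ell)}=\delta_{ja}\sigma_b^{(\ell-1)}$ to collapse the joint sum over $(a,b,j_1,j_2)$ to $\|z^{(\ell)}\|^2\|\sigma^{(\ell-1)}\|^2$. Conditioning on $z^{(\ell-1)}$ and integrating out $W^{(\ell)}$ gives $\E{\|z^{(\ell)}\|^2\|\sigma^{(\ell-1)}\|^2}=\tfrac{2n_\ell}{n_{\ell-1}}\E{\|\sigma^{(\ell-1)}\|^4}$, and \autoref{prop:relu-4} (together with $\sigma_k^2=\tfrac12 z_k^2$ in expectation after sign symmetrization) controls this fourth moment, producing the claimed $\Theta\!\bigl(\eta_W^{(\ell)} n_{\ell-1}/(n_\ell n_L)\cdot\|x_\alpha\|^4/n_0^2\bigr)$ piece.

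For $\mu\leq\ell-1$, I would substitute $z_j^{(\ell)}=\sum_k W_{jk}^{(\ell)}\sigma_k^{(\ell-1)}$ and $\partial_\mu z_j^{(\ell)}=\sum_k W_{jk}^{(\ell)}\sigma'(z_k^{(\ell-1)})\partial_\mu z_k^{(\ell-1)}$, then integrate out $W^{(\ell)}$ conditional on layer $\ell-1$. Off-diagonally ($j_1=1\neq 2=j_2$), the rows of $W^{(\ell)}$ are independent, so each factor reduces through $\E{W_{jk_1}^{(\ell)}W_{jk_2}^{(\ell)}}=\tfrac{2}{n_{\ell-1}}\delta_{k_1k_2}$ and the pointwise identity $\sigma'\sigma=\sigma$ to $\tfrac{2}{n_{\ell-1}}\sum_k\sigma_k^{(\ell-1)}\partial_\mu z_k^{(\ell-1)}$. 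Squaring and applying the Bernoulli-$\tfrac12$ symmetrization already exploited in the proof of \autoref{lem:lemmatwoprime} identifies this contribution as $\bigl(1-\tfrac{1}{n_\ell}\bigr)\widetilde{C}^{(\ell-1)}$.

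The diagonal case $j_1=j_2=1$ carries four $W^{(\ell)}$-factors on a single row, so Isserlis's theorem produces three Wick pairings. The two ``crossing'' pairings each give $\bigl(\tfrac{2}{n_{\ell-1}}\bigr)^2\bigl(\sum_k\sigma_k^{(\ell-1)}\partial_\mu z_k^{(\ell-1)}\bigr)^2$, together contributing $\tfrac{2}{n_\ell}\widetilde{C}^{(\ell-1)}$; the remaining ``non-crossing'' pairing yields $\bigl(\tfrac{2}{n_{\ell-1}}\bigr)^2\sum_k\sigma'(z_k^{(\ell-1)})(\partial_\mu z_k^{(\ell-1)})^2\,\|\sigma^{(\ell-1)}\|^2$, which under the same symmetrization matches $\tfrac{1}{n_\ell}C^{(\ell-1)}$. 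Summing diagonal and off-diagonal pieces gives
\[
\Big(1-\tfrac{1}{n_\ell}\Big)\widetilde{C}^{(\ell-1)}+\tfrac{2}{n_\ell}\widetilde{C}^{(\ell-1)}+\tfrac{1}{n_\ell}C^{(\ell-1)}=\Big(1+\tfrac{1}{n_\ell}\Big)\widetilde{C}^{(\ell-1)}+\tfrac{1}{n_\ell}C^{(\ell-1)},
\]
which added to the layer-$\ell$ piece yields the stated recursion. The main obstacle is the diagonal case: correctly enumerating the three Wick contractions and verifying that, after the $\sigma'\leftrightarrow\tfrac12$ symmetrization, the two crossing pairings combine with the off-diagonal term to produce exactly the coefficient $(1+\tfrac{1}{n_\ell})$ in front of $\widetilde{C}^{(\ell-1)}$ while the non-crossing pairing produces exactly $\tfrac{1}{n_\ell}$ in front of $C^{(\ell-1)}$.
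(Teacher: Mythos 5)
Your proposal is correct and takes essentially the same route as the paper: the paper's own proof of \autoref{lem:lemma_seven} is a one-line appeal to the method of \autoref{lem:lemma_six} (split the sum over $\mu$ according to whether $\mu$ lies in layer $\ell$, use exchangeability in $j_1,j_2$, integrate out $W^{(\ell)}$ by Wick's theorem, and invoke the Bernoulli-$\tfrac12$ symmetrization of $\sigma'$), and your write-up simply supplies the details the paper omits. The coefficient bookkeeping you carry out matches the stated recursion at the same level of rigor as the paper's other recursion lemmas, which likewise absorb the $O(1/n_{\ell-1})$ diagonal discrepancies from the symmetrization step into the implicit constants.
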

\begin{proof}[Proof of \autoref{lem:lemma_seven}]
We apply the same proof strategy as in \autoref{lem:lemma_six} to get the result.
\end{proof}

\begin{lemma}\label{lem:CCtwiddle}
For any depth $\ell\leq L$, we have:
\begin{align}
\twiddle{C}^{(\ell)}&=O(n^{-1}),\\ C^{(\ell)} &= %\Theta\lr{\frac{\norm{x_\alpha}^4}{2n_0^2}\sum_{\ell'=1}^{\ell} \frac{\eta_W^{(\ell')}n_{\ell'-1}}{n_L} \exp\left[5\sum_{\ell''=1}^{\ell'}\frac{1}{n_{\ell''}}\right]}
\Theta\lr{\frac{\norm{x_\alpha}^4}{2n_0^2}\sum_{\ell'=1}^{\ell} \frac{\eta_W^{(\ell')}n_{\ell'-1}}{n_L} }\label{eq:penCl}
\end{align}
\end{lemma}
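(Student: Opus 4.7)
The plan is to establish both estimates jointly by induction on $\ell$, chaining the coupled recursions from Lemmas \ref{lem:lemma_six} and \ref{lem:lemma_seven}. The base case $\ell=0$ is immediate: both $C^{(0)}$ and $\twiddle{C}^{(0)}$ vanish because the sums over $\mu$ in their definitions are empty. The quantitative workhorse throughout is the elementary bound
\[
\prod_{k=1}^{\ell}\lr{1+\frac{1}{n_k}} \leq \exp\lr{\sum_{k=1}^{\ell}\frac{1}{n_k}} \leq \exp(c_1),
\]
which follows from the width hypothesis of Proposition \ref{prop:relu-4} and keeps every compounded amplification factor $\Theta(1)$.

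For the $\twiddle{C}^{(\ell)}=O(n^{-1})$ half, I would unroll the recursion of Lemma \ref{lem:lemma_seven} using that integrating factor. Starting from $\twiddle{C}^{(0)}=0$ one obtains
\[
\twiddle{C}^{(\ell)} = \sum_{j=1}^{\ell}\prod_{k=j+1}^{\ell}\lr{1+\frac{1}{n_k}}\lr{\Theta\lr{\frac{\eta_W^{(j)}n_{j-1}}{n_j n_L}\frac{\norm{x_\alpha}^4}{n_0^2}} + \frac{C^{(j-1)}}{n_j}}.
\]
Both drivers carry an explicit $1/n_j$: the source term of Lemma \ref{lem:lemma_seven} has an extra $n_j^{-1}$ compared with the $C$-source, while $C^{(j-1)}/n_j$ is $n_j^{-1}$ times the $O(1)$ bound on $C^{(j-1)}$ provided by the inductive hypothesis. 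Summing the $\ell\leq L$ contributions and using the uniform bound on the product yields the claimed $O(n^{-1})$ estimate.

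For the $C^{(\ell)}$ half, I would feed the freshly proved $\twiddle{C}^{(\ell-1)}=O(n^{-1})$ into the recursion of Lemma \ref{lem:lemma_six}, so that the $(1+1/n_\ell)\twiddle{C}^{(\ell-1)}$ contribution is only an $O(n^{-1})$ correction. What remains is essentially a linear recursion driven at each step by the source $\Theta(\eta_W^{(\ell)} n_{\ell-1}\norm{x_\alpha}^4/(n_L n_0^2))$ with a propagator that is $1$ up to $O(1/n_\ell)$. Unrolling to depth $\ell$, the two-sidedly bounded products $\prod_{k}(1+1/n_k)=\Theta(1)$ ensure that the source contributions accumulate additively with matching upper and lower bounds, producing
\[
C^{(\ell)} = \Theta\lr{\frac{\norm{x_\alpha}^4}{2n_0^2}\sum_{\ell'=1}^{\ell}\frac{\eta_W^{(\ell')}n_{\ell'-1}}{n_L}}.
\]

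The main obstacle is the two-sided nature of the $\Theta$ conclusion. The implicit constants in the source terms at different depths coming from Proposition \ref{prop:relu-4} are not made explicit, so one must verify that the recursive corrections -- the $C^{(\ell-1)}/n_\ell$ term in $\twiddle{C}^{(\ell)}$ and the $(1+1/n_\ell)\twiddle{C}^{(\ell-1)}$ term in $C^{(\ell)}$ -- are genuinely negligible and in particular cannot cancel the cumulative source contribution. Positivity of $C^{(\ell)}$, $\twiddle{C}^{(\ell)}$, and each source (all are expectations of squares) is what rules out cancellation, and this is the structural fact one should lean on when establishing the lower bound of the $\Theta$ estimate.
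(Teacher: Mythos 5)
Your proposal is correct and follows essentially the same route as the paper's (very terse) proof: unroll the coupled recursions, note that every driver of $\twiddle{C}^{(\ell)}$ carries an explicit $1/n_j$ so that $\twiddle{C}^{(\ell)}$ is down by a factor of $n$, then discard $\twiddle{C}$ from the $C$-recursion and sum the source terms over $\ell'=1,\dots,\ell$, with positivity of all the quantities supplying the two-sided bound. One remark: you read the propagator on $C^{(\ell-1)}$ in Lemma \ref{lem:lemma_six} as $1+O(1/n_\ell)$, which is what the intermediate computation \eqref{eq:eq2CL} actually yields and what the additive accumulation of sources requires, even though the displayed recursion \eqref{eq:celldef} appears to have the coefficients of $C^{(\ell-1)}$ and $\twiddle{C}^{(\ell-1)}$ transposed.
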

\begin{proof}[Proof of \autoref{lem:CCtwiddle}] The first result is obtained by observing that there is extra $1/n_L$ in front of $\twiddle{C}^{(\ell)} $. Regarding the recursion of $C^{(\ell)}$, we use the fact $\twiddle{C}^{(\ell)} $ is small in \eqref{eq:celldef} and then sum this equation for $\ell'=1,\dots,\ell$ to obtain the value of $C^{(\ell)}$.
\end{proof}

We now specialize to the setting of uniform layer width $n_\ell = n$ and a global learning rate $\eta_\mu = \eta$ to obtain 
\begin{align*}
C^{(\ell)} &= \Theta\lr{\eta\ell}\quad \Longrightarrow\quad 
 \frac{1}{n}\twiddle{B}^{(\ell)} = \Theta\lr{\eta^2\ell^2}\quad \Longrightarrow \quad  B^{(\ell)} = \Theta\lr{\eta^2L^3}\lr{1+O(L^{-1})},
\end{align*}
completing the proof of Theorem \ref{thm:main}.

\section{Conclusion}
In this short note we've computed how variable network depth influences the learning rate predicted by the $\mu$P heurisdtic. We found that, unlike with respect to width, this learning rate has a non-trivial power law scaling with respect to depth (see Theorem \ref{thm:main}). We leave for future work empirical validation of whether this depth dependence indeed leads to learning rate transfer in practice.

\bibliography{references}
\bibliographystyle{plain}
 \end{document}